\newtheorem{prop}{Proposition}
\newtheorem{prob}{Problem Statement}
\def\BibTeX{{\rm B\kern-.05em{\sc i\kern-.025em b}\kern-.08em
		T\kern-.1667em\lower.7ex\hbox{E}\kern-.125emX}}
\newcommand\copyrighttext{%
  \footnotesize \textcopyright 2021 IEEE. Personal use of this material is permitted.
  Permission from IEEE must be obtained for all other uses, in any current or future
  media, including reprinting/republishing this material for advertising or promotional
  purposes, creating new collective works, for resale or redistribution to servers or
  lists, or reuse of any copyrighted component of this work in other works.
  DOI: 10.23919/DATE51398.2021.9473994}
\newcommand\copyrightnotice{%
\begin{tikzpicture}[remember picture,overlay]
\node[anchor=south,yshift=10pt] at (current page.south) {\fbox{\parbox{\dimexpr\textwidth-\fboxsep-\fboxrule\relax}{\copyrighttext}}};
\end{tikzpicture}%
}
\begin{document}
	
	\title{Continuous Safety Verification of Neural Networks
	}

	\author{
		\IEEEauthorblockN{Chih-Hong Cheng\textsuperscript{\Letter}\IEEEauthorrefmark{1}, Rongjie Yan\textsuperscript{\Letter}\IEEEauthorrefmark{2}\IEEEauthorrefmark{3}}\thanks{This research is part of FOCETA project that has received funding from the European Union’s Horizon 2020 research and innovation programme under grant agreement No 956123. This work has been partly funded by Key Research Program of Frontier Sciences, CAS, under Grant No. QYZDJ-SSW-JSC036, the CAS-INRIA major project under No. 171311KYSB20170027.}
		\IEEEauthorblockA{\IEEEauthorrefmark{1}DENSO AUTOMOTIVE Deutschland GmbH, Eching, Germany}
		\IEEEauthorblockA{\IEEEauthorrefmark{2} State Key Laboratory of Computer Science, ISCAS, Beijing, China}
		\IEEEauthorblockA{\IEEEauthorrefmark{3} University of Chinese Academy of Sciences, Beijing, China}
		Email: c.cheng@eu.denso.com, yrj@ios.ac.cn
		
	}
	\vspace{-2mm}
	
	\maketitle
	
	\copyrightnotice
	
	\begin{abstract}
		Deploying deep neural networks (DNNs) as core functions in autonomous driving creates unique verification and validation challenges. In particular, the continuous engineering paradigm of gradually perfecting a DNN-based perception can make the previously established result of safety verification no longer valid. This can occur either due to the newly encountered examples (i.e., input domain enlargement) inside the Operational Design Domain or due to the subsequent parameter fine-tuning activities of a DNN. This paper considers approaches to transfer results established in the previous DNN safety verification problem to the modified problem setting. By considering the reuse of state abstractions, network abstractions, and Lipschitz constants, we develop several sufficient conditions that only require formally analyzing  a small part of the DNN in the new problem. The overall concept is evaluated in a $1/10$-scaled vehicle that equips a DNN controller to determine the visual waypoint from the perceived image.   
		
	\end{abstract}
	
	\begin{IEEEkeywords}
		DNN, safety, formal verification, continuous engineering
	\end{IEEEkeywords}
	
	\section{Introduction}

	Deep neural networks (DNNs) have been widely adopted in automated driving, with proven-in-use applications from perception to assisting complex decision making. Deploying autonomous driving functionalities in the open environment creates substantial challenges in the underlying AI (artificial intelligence)/ML (machine learning) component, partly due to the safety-critical nature and partly due to the encountering of ``black swans", i.e., scenarios that were not considered in the design time. Recently developed safety guidelines in automated driving such as ISO TR4804 or UL4600 explicitly suggest the monitoring of abnormal cases in operation, either in an online or an offline fashion. When encountering these cases, the involved DNN component should be improved accordingly. The underlying rationale is to admit the imperfection of the initially engineered system while targeting a \emph{continuous improvement}.
	One natural question that arises is regarding the huge computational efforts in \emph{formally verifying} a DNN: to what extent can results in formal verification of previous models be reused?

	In this paper, we consider the problem of \emph{formal DNN safety verification under continuous engineering (fine-tuning)}. Motivated by concrete issues in automated driving, the problem is related to migrating the result of one verification problem to another, with two problems differing in two aspects.  
	
	\begin{itemize}
		\item Due to the discovery of \emph{black swans}  (more precisely speaking,  out-of-distribution data points) in run time, the \emph{input domain} for verification will be enlarged. Here we take the approach of abstraction-based monitoring~\cite{cheng2019runtime,henzinger2019outside} where the abstraction~$D_{in}$ capturing the in-distribution data will be enlarged to $D_{in} \cup \Delta_{in}$, due to the newly discovered data $d \in \Delta_{in}$ falling outside $D_{in}$. 
		\item  The trained parameters (weights, bias) between two neural networks may only differ slightly, due to their fine-tunings (i.e., the new model is further tuned from the old model with a very small learning rate such as~$10^{-3}$).
		
	\end{itemize}

	To enable continuous verification, one premise is to have reasonable assumptions on proof artifacts reused from the old verification problem to the new one. Concretely, we consider reusing \emph{state abstractions}, \emph{network abstractions}, and \emph{Lipschitz constants}. State abstractions are sound
	over-approximations of all possible reachable states created via layer-wise reasoning methods~\cite{wang2018formal,gehr2018ai2,singh2019abstract,tran2019star}. Network abstraction~\cite{elboher2019an} refers to property-directed abstraction on the structure of a given neural network. Finally, Lipschitz constants are conservative bounds on the degree of output change subject to input change. 
	
	The key contribution of this paper is the establishment of several sufficient conditions utilizing state abstractions, Lipschitz constants, and network abstractions, in order to avoid complete safety verification on the new problem. The satisfaction of sufficient conditions requires checking one or more substantially simpler problems involving local property reasoning over part of the new network. To reuse state abstraction $S_1, S_2, \ldots, S_n$ over intermediate layers, one can take advantage of the precision increase in exact methods or abstraction methods with refinement capabilities to \emph{locally check} if the abstract state~$S_{i-1}$ before layer~$i$, after passing the computation of the layer, can be contained in the abstract state set~$S_{i}$. The locality of subproblems makes the verification pipeline implementable in a parallelized fashion. Reusing Lipschitz constants requires that the worst-case estimate of output change subject to domain enlargement does not influence safety. Lastly, reusing network abstraction requires that the new network can also be transformed into the same abstraction generated from the previous network.
	
	For initial validation, we  use a~$1/10$ scale vehicle platform that performs autonomous lane following using DNN-based visual perception. The incremental verification, due to only checking conditions of substantially simpler problems, leads to a significant performance gain. The required execution time to prove the safety property in the new problem can be as few as~$0.16\%$ of the original problem-solving time, thanks to reusing proof artifacts. Despite encouraging outcomes in our initial evaluation, the research also reveals many interesting yet challenging questions to be further explored.  
	
	The rest of the paper is structured as follows. After highlighting related works in  Section~\ref{sec:related}, Section~\ref{sec:concepts} provides basic definitions of DNN safety verification and defines the variation under continuous engineering. Section~\ref{sec:verification} presents the main technical results on several sufficient conditions for the safety property to hold in the new problem.  We explain our experimental setup in Section~\ref{sec:experiment} and conclude with future work in Section~\ref{sec:conclusion}. Due to space limits, propositions with intuitively simple proofs are omitted. 
	
	\section{Related work}\label{sec:related}
	
	The recent adoption of DNNs in safety-critical applications such as autonomous driving, flight control, and medical diagnostics leads to fruitful results in developing specialized verification and validation techniques for DNNs~\cite{huang2020survey}. For formal verification of safety properties, DNNs using piecewise linear activation functions such as ReLU or Leaky ReLU  are essentially 0-1 integer programs and can be solved with satisfiability modulo theories (SMT)~\cite{katz2017reluplex,ehlers2017formal,katz2019marabou} or mixed-integer linear programming (MILP)~\cite{cheng2017maximum,lomuscio2017approach,dutta2018output} techniques. 
	Such methods could provide sound and complete answers but fail to verify large scale DNNs. 
	To improve the scalability of the solver,  various sound approximation methods such as symbolic interval~\cite{wang2018formal}, zonotope~\cite{gehr2018ai2}, polyhedron~\cite{singh2019abstract}, and star set~\cite{tran2019star} have been adopted to check various safety properties. The approximations provide layered state abstraction and are sound with respect to the given safety properties. 
	To improve the precision of abstraction-based methods, further heuristics are considered to refine specific abstractions (e.g.~\cite{singh2018boosting}). 
	Lipschitz continuity is used in specialized DNN training schemes to intentionally increase the robustness~\cite{tsuzuku2018lipschitz}, while also being adopted for safety verification of DNNs. For example, the set of possible output values can be computed with Lipschitz constant and the given set of inputs, thereby allowing the examination of safety properties~\cite{ruan2018reachability}. Due to the importance of Lipschitz constants in understanding the characteristics of DNNs, several recent results~\cite{fazlyab2019efficient,zou2019lipschitz} focus on the accurate estimation of Lipschitz constants. Additional to the aforementioned methods, the structure of DNNs can also be abstracted~\cite{elboher2019an} such that both exact and approximation methods could be applied. When the false positive happens, refinement over the structure is required.  Lastly, an interesting problem related to ours is a recent work to check the difference of two DNNs~\cite{paulsen2020reludiff}, which provides formal guarantees for the relationship between two networks with forward interval analysis and backward refinement. 
	
	Despite tremendous efforts targeting to formally verify DNNs, to the best of our knowledge, all of the existing results do not consider a continuous verification setup similar to ours. Our key focus is on reusing the proof artifacts in previous verification tasks to avoid re-proving everything from scratch. This is because, from our practical experience, DNNs used in autonomous driving are extremely complex; the required time for completing the verification task can be enormous, and it is a realistic expectation to encounter multiple domain enlargement and fine-tuning activities. To this end, our work well complements all existing works, and results in formal safety verification of DNNs can be  reconsidered under a continuous verification setting.

	\section{Formulation}\label{sec:concepts}

	\subsection{DNN and Safety Verification}
	A feed-forward DNN model consists of an input layer, multiple hidden layers and an output layer. We consider only DNNs after training, i.e., all associated parameters related to the model (weights, bias) are fixed. With fixed parameters, a DNN model is a function $f: X\rightarrow Y$, where $X$ is the input domain, and $Y$ is the output domain. The model $f$ is built out of a sequence of functions $f:=g_n \otimes g_{n-1} \otimes \ldots \otimes  g_2 \otimes
	g_1$, where $n$ is the number of layers in the DNN, and $g_k$ ($k=1, \ldots, n$) is the function transformation in the $k$-th layer. Given an input $x \in X$, the computation of $f$  is $f(x)=g_n(g_{n-1}(\ldots (g_2 (g_1(x)))\ldots))$, i.e., to perform functional composition over $g_1, \ldots, g_n$. The internal transformation $g_k$ can be viewed as another function composition where the input of~$g_k$ is first linearly transformed, followed by a nonlinear computation using functions such as ReLU (Rectified Linear Unit), Leaky ReLU or sigmoid. 
	
	\vspace{1mm}
	
	Given a DNN~$f$, \emph{\textbf{safety verification}} considered in this paper is formulated as follows: Let $D_{in} \subseteq X$ be the set of input values to be verified, and $D_{out} \subseteq Y$ be the set of safe output values. The safety verification problem checks that  given $f$, $D_{in}$ and $D_{out}$, whether the property $\phi^{f}_{D_{in}, D_{out}}$ holds, where
	
	\vspace{-2mm}
	$$\phi^{f}_{D_{in}, D_{out}} := \forall x \in D_{in}: f(x) \in D_{out}$$

	\subsection{DNN Safety Verification under Continuous Engineering}	
	
	We consider the following changes in the process of continuous improvement, allowing us to define the problem of continuous safety verification.

	\begin{itemize}
		\item \textbf{(Domain enlargement)} The domain of input may be enlarged (due to the need to reconsider additional input values). Given $D_{in} \subseteq X$, we use $D_{in}\cup \Delta_{in}\subseteq X$ to represent the new set of input states to be verified against the safety property.
		
		\item \textbf{(Network parameter adjustment)} The parameters of a DNN can be changed via further training. We use the notation $f' := g'_n\otimes g'_{n-1}\otimes  g'_2\otimes g'_1$ to indicate the model being further tuned from $f$. 
		
	\end{itemize}
	
	\begin{prob}
		Given a neural network model~$f$, a modified model $f'$, as well as $D_{in}$, $\Delta_{in}$, and $D_{out}$, the problem of \textbf{Safety Verification between Two Versions (\textbf{SVbTV})} asks the following: Provided that $\phi^{f}_{D_{in}, D_{out}}$ holds, check if $\phi^{f'}_{D_{in}\cup \Delta_{in}, D_{out}}$ holds as well. 
	\end{prob}
	
	\vspace{2mm}
	One sub-case appears when $\Delta_{in} = \emptyset$, i.e., we consider only the change of network parameters. This sub-case is handled in our general framework. Yet the other sub-case appears when $f'$ and $f$ are the same, i.e., we consider the same network under an enlarged input set for safety verification. 
	
	\begin{prob}
		Given a neural network model~$f$  as well as $D_{in}$, $\Delta_{in}$, and $D_{out}$, the problem of \textbf{Safety Verification under Domain Change (\textbf{SVuDC})} asks the following: Provided that $\phi^{f}_{D_{in}, D_{out}}$ holds, check if $\phi^{f}_{D_{in}\cup \Delta_{in}, D_{out}}$ holds as well. 
	\end{prob}
	
	\vspace{1mm}

	\section{Continuous verification}\label{sec:verification}
	
	To address SVbTV or SVuDC problems, it is important to have realistic assumptions in terms of how the proof over the verified property~$\phi^{f}_{D_{in}, D_{out}}$ is stored for reuse. Without any proof reusing, checking the property~$\phi^{f'}_{D_{in}\cup\Delta_{in}, D_{out}}$ needs to completely restart from scratch. In this paper, we assume that the original DNN $f:=g_n \otimes \ldots \otimes 
	g_1$ has been verified to satisfy property~$\phi^{f}_{D_{in}, D_{out}}$, with the proof artifacts available in one or more of the following categories:
	\begin{itemize}
		\item \textbf{Lipschitz constant}  of the original DNN~$f$. Precisely, a Lipschitz constant~$\ell$ is a positive real number such that
		\begin{equation}
		|f(x_1)-f(x_2)|\leq \ell \;|x_1-x_2| ~~\forall x_1,x_2\in X
		\end{equation}
		\item \textbf{State abstractions} $S_1, \ldots, S_n$ over layers to establish the safety proof, where 
		\begin{itemize}
			\item $\forall x \in D_{in}$, $g_1(x) \in S_1$, 
			\item $\forall i \in \{1, \ldots, n-1\}, \forall x_i \in S_i: g_{i+1}(x_i) \in S_{i+1}$, and 
			\item $S_{n} \subseteq D_{out}$. 
		\end{itemize}
		\item \textbf{Network abstraction} $\hat{f}$ of the original DNN~$f$ where 
		$\forall x\in D_{in}$, $f(x)\in \{\hat{f}(x)\;|\;x\in D_{in}\}$, and we use $f\xrightarrow{D_{in}}\hat{f}$  to represent the the relation between~$f$ and~$\hat{f}$. Safety verification utilizing network abstraction techniques is based on establishing the safety proof that $\{\hat{f}(x)\;|\;x\in D_{in}\} \subseteq D_{out}$, where $\hat{f}$ is a structurally simpler network to be verified against.
	\end{itemize}
	
	As stated in previous sections, there are many verification methods to derive Lipschitz constants or to derive various forms of state or network abstractions, and it is beyond the scope of this paper. Recall that our goal is to enable \emph{proof reuse} subject to the verification setting. 
	
	\begin{figure}[t]
		\centering
		\includegraphics[width=0.8\columnwidth]{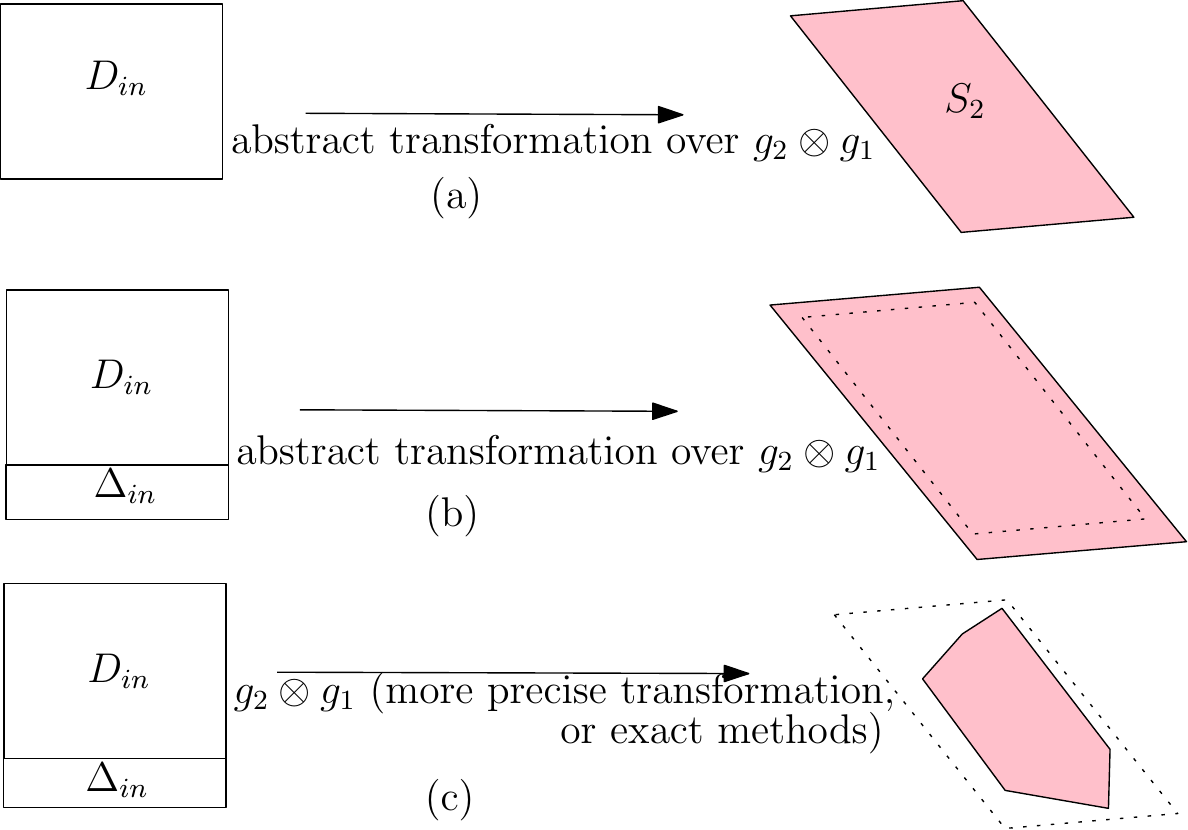}
		\vspace{-2mm}
		\caption{Insight of Proposition~\ref{prop:SVuDC.first.layer}}
		\label{fig.milp.helps}
		\vspace{-3mm}
	\end{figure}
	
	\subsection{Solving SVuDC}
	
	The following results demonstrate that the SVuDC problem can sometimes be solved with a local problem that involves constraint solving of two layers.

	\begin{prop}\label{prop:SVuDC.first.layer}[Proof reuse at layers~1 and 2] Given state abstractions $S_1, \ldots, S_n$ for  establishing the proof of $\phi^{f}_{D_{in}, D_{out}}$. If $\forall x \in D_{in} \cup \Delta_{in}$, $g_2(g_1(x)) \in S_2 $, then property $\phi^{f}_{D_{in}\cup \Delta_{in}, D_{out}}$ also holds. 
	\end{prop}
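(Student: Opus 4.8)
The plan is to propagate an arbitrary input through the network one layer at a time, re-entering the chain of reusable state abstractions at layer~2 rather than layer~1. First I would fix an arbitrary $x \in D_{in} \cup \Delta_{in}$ and invoke the hypothesis directly to obtain $g_2(g_1(x)) \in S_2$. This is the crucial foothold: I deliberately do \emph{not} attempt to show $g_1(x) \in S_1$, because the abstraction $S_1$ was constructed only to over-approximate $g_1$ applied to the old domain $D_{in}$, and a newly added point $x \in \Delta_{in}$ may well have $g_1(x) \notin S_1$. By checking the two-layer composite instead, I bypass the possibly invalidated inclusion at layer~1 and land squarely inside $S_2$.

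From there the argument is a routine forward induction using the second defining property of the stored state abstractions. I would prove by induction on $i \in \{2, \ldots, n\}$ that $(g_i \otimes \cdots \otimes g_1)(x) \in S_i$. The base case $i = 2$ is exactly the hypothesis. For the inductive step, assuming $(g_i \otimes \cdots \otimes g_1)(x) \in S_i$ for some $i \leq n-1$, the soundness condition $\forall x_i \in S_i: g_{i+1}(x_i) \in S_{i+1}$ immediately yields $(g_{i+1} \otimes \cdots \otimes g_1)(x) \in S_{i+1}$. Taking $i = n$ gives $f(x) = (g_n \otimes \cdots \otimes g_1)(x) \in S_n$, and the stored inclusion $S_n \subseteq D_{out}$ then gives $f(x) \in D_{out}$. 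Since $x$ was arbitrary in $D_{in} \cup \Delta_{in}$, this establishes $\phi^{f}_{D_{in} \cup \Delta_{in}, D_{out}}$.

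I do not expect a genuine technical obstacle here; the content is entirely in the statement rather than the proof. The only point requiring care is conceptual: recognizing \emph{why} the checkable condition is phrased at layer~2. Because the network $f$ is unchanged under SVuDC, each later abstraction $S_{i+1}$ is defined to absorb the image of \emph{all} of $S_i$ under $g_{i+1}$, independently of the input domain, so the inductive soundness links among $S_2, \ldots, S_n$ survive the enlargement untouched. The enlargement $\Delta_{in}$ can therefore break only the first inclusion $g_1(x) \in S_1$ of the original proof. Hence a single fresh obligation $g_2(g_1(x)) \in S_2$ suffices to splice the new inputs back into an otherwise unchanged proof chain — this is the insight depicted in Fig.~\ref{fig.milp.helps} — and it is precisely what makes the remaining $n-2$ layers of abstraction reusable for free.
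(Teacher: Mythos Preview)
Your proof is correct and follows essentially the same approach as the paper: both use the hypothesis to land in $S_2$ and then propagate forward through the unchanged abstraction chain $S_2,\ldots,S_n\subseteq D_{out}$. The paper's version is more terse (it simply asserts that anything in $S_2$ reaches $D_{out}$ after the remaining layers), whereas you spell out the induction and the rationale for bypassing $S_1$ explicitly, but the underlying argument is identical.
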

	
	\begin{proof}  Recall that for the state abstractions $S_1, \ldots, S_n$, they by definition satisfy $\forall i \in \{2, \ldots, n-1\}, \forall x_i \in S_i: g_i(x_i) \in S_{i+1}$ and $S_{n} \subseteq D_{out}$. This means that any state in $S_2$, after passing the rest of DNN, leads to an output that is contained in $D_{out}$. Therefore, so long as $\forall x \in D_{in} \cup \Delta_{in}$, $g_2(g_1(x)) \in S_2$, then $\forall x \in D_{in} \cup \Delta_{in}$, one derives $f(x) \in D_{out}$. 
	\end{proof}
	
	\vspace{-2mm}
	
	To utilize the above proposition, the key is to prove that $\forall x \in D_{in} \cup \Delta_{in}: g_2(g_1(x)) \in S_2$, i.e., all inputs in the enlarged set, after passing $g_2$, can be captured using $S_2$. Illustrated in  Figure~\ref{fig.milp.helps}, using the same abstract transformation over $D_{in} \cup \Delta_{in}$ shall generate an abstract state set  larger than $S_2$ (Figure~\ref{fig.milp.helps}-b), making it impossible to reuse the proof. However, the set of actual reachable values after transformation can be  smaller, similar to the one illustrated in  Figure~\ref{fig.milp.helps}-c. This creates the potential to use methods with higher precision. For example, one
	can use \emph{exact verification methods} that encode $\forall x \in D_{in} \cup \Delta_{in}: g_2(g_1(x)) \in S_2$ as constraints. Note that here the verification method only solves a substantially smaller problem that encodes non-linearity in the first two layers.\footnote{Proposition~\ref{prop:SVuDC.first.layer} involves the computation of two hidden layers rather than one hidden layer; this is based on an observation that existing sound methods using abstract interpretation can lose precision after passing two nonlinear layers, thereby creating space for local solving using exact methods or abstraction-refinement techniques.}
	
	\begin{figure}[t]
		\centering
		\includegraphics[width=0.9\columnwidth]{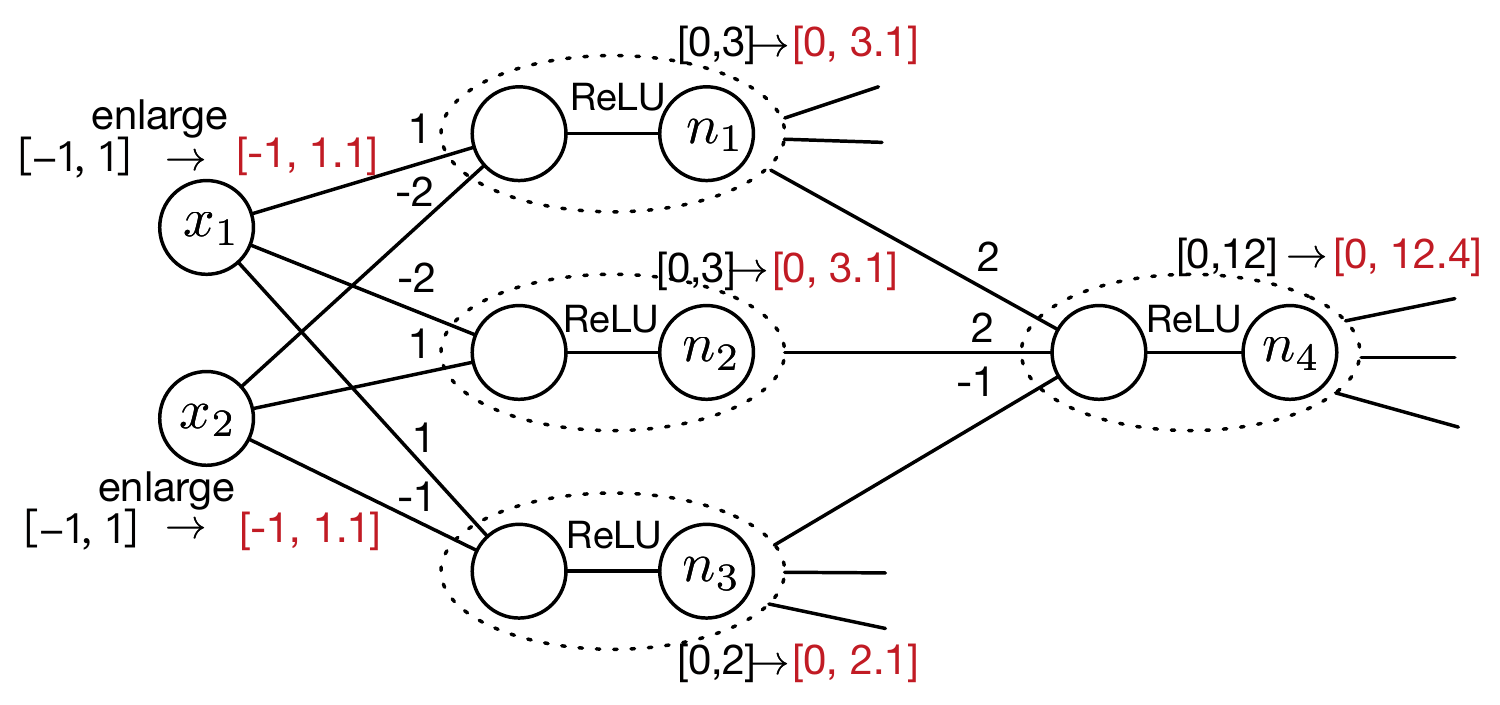}
		\vspace{-2mm}
		\caption{A simple DNN for applying Proposition~\ref{prop:SVuDC.first.layer}}
		\label{fig.prop1.example}
		\vspace{-5mm}
	\end{figure}
	
	\vspace{1mm}
	\noindent{\textbf{(Example)}}  Consider a DNN shown in Figure~\ref{fig.prop1.example} with two inputs and multiple hidden layers. Here we only visualize part of the network to explain the concept. The intervals in black show the intervals of inputs as well as  the result of abstract interpretation\footnote{Here we use boxed abstraction to ease the discussion, but in our evaluation, other types abstract transformers with better precision are used.}; for neuron $n_4$ its value is bounded by~$[0, 12]$. When the input domain is increased from $[-1, 1]\times[-1, 1]$ to $[-1, 1.1]\times[-1, 1.1]$, the result of abstract interpretation  (shown in Figure~\ref{fig.prop1.example} with red texts) indicates that $n_4$ is conservatively bounded by $[0, 12.4]$. If one wishes to reuse the proof, the condition in Proposition~\ref{prop:SVuDC.first.layer} requires that $n_4$ is bounded by $[0, 12]$. The condition can be encoded (Equation~\ref{eq:encode_two_layers}) into a mixed integer programming problem, where the nonlinearity of ReLU can be encoded using big-M approaches~\cite{cheng2017maximum,lomuscio2017approach,dutta2018output}. In this example, exact approaches indicate that the maximum possible value for~$n_4$ equals~$6.2$. As~$6.2 < 12$, the safety property also holds in the enlarged domain.  
	
	\vspace{-5mm}
	\begin{equation}
	\begin{split}
	&-1 \leq x_1 \leq 1.1 \\
	&-1 \leq x_2 \leq 1.1 \\
	&n_1 = {\small\textsf{ReLU}}(x_1 - 2x_2)\\
	&n_2 = {\small\textsf{ReLU}}(-2x_1 + x_2)\\
	&n_3 = {\small\textsf{ReLU}}(x_1 - x_2)\\
	&n_4 = {\small\textsf{ReLU}}(2n_1 +2 n_2 -1 n_3)\\
	&n_4 \geq 12
	\end{split}
	\label{eq:encode_two_layers}
	\end{equation}

	When Proposition~\ref{prop:SVuDC.first.layer} is not applicable, one may proceed with building new sets of abstractions in a layer-wise fashion. During the construction, one can check in parallel if the new state abstraction~$S'_{j}$, after passing to the next layer using exact methods encoding $g_{j+1}$, falls inside the originally created abstraction~$S_{j+1}$. When such a situation occurs, safety is immediately guaranteed in the enlarged domain, as for every $x_{j+1} \in S_{j+1}$, the previous proof artifact ensures that $g_n(g_{n-1}\ldots (g_{j+2}(x_{j+1}))) \in D_{out}$. 
	
	\begin{prop}\label{prop:SVuDC.subsequent.layer}[Proof reuse at layer~$j+1$] Given state abstractions $S_1, \ldots, S_n$ for establishing the proof of $\phi^{f}_{D_{in}, D_{out}}$. The property $\phi^{f}_{D_{in}\cup \Delta_{in}, D_{out}}$ also holds if there exists $j \in \{2, \ldots n-1\}$ and $S'_1, \ldots,  S'_{j-1}$ where
		\begin{itemize}
			\item $\forall x \in D_{in}\cup \Delta_{in}$, $g_1(x) \in S'_1$, 
			\item $\forall i \in \{1, \ldots, j - 1\}, \forall x_i \in S_i': g_{i+1}(x_i) \in S'_{i+1}$, and 
			\item $\forall x_{j} \in  S'_{j}:  g_{j+1}(x_{j}) \in  S_{j+1}$. 
		\end{itemize}
	\end{prop}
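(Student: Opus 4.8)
The plan is to prove the claim by a direct chaining argument that splits the network into two segments joined at layer~$j$: a ``fresh'' segment (layers~$1$ through~$j$) reasoned about with the new abstractions $S'_1, \ldots, S'_j$, and a ``reused'' segment (layers~$j+1$ through~$n$) for which we simply invoke the already-established properties of the original abstractions $S_{j+1}, \ldots, S_n$. Fix an arbitrary $x \in D_{in}\cup\Delta_{in}$; the goal is to show $f(x)\in D_{out}$.

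First I would handle the fresh segment by induction on the layer index. The first bullet gives $g_1(x)\in S'_1$ as the base case. The second bullet provides the inductive step: whenever $x_i\in S'_i$ we have $g_{i+1}(x_i)\in S'_{i+1}$ for $i\in\{1,\ldots,j-1\}$. Chaining these yields $(g_j\otimes\cdots\otimes g_1)(x)\in S'_j$ (note that although the statement lists $S'_1,\ldots,S'_{j-1}$ explicitly, the set $S'_j$ is produced by taking $i=j-1$ in the second bullet, so it is well defined).

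The crux of the argument is the hand-off at layer~$j$. Writing $x_j := (g_j\otimes\cdots\otimes g_1)(x)\in S'_j$, the third bullet---the \emph{bridging condition} $\forall x_j\in S'_j:\; g_{j+1}(x_j)\in S_{j+1}$---deposits the computation back into the original abstraction $S_{j+1}$. This is the only place where the new and old chains interact, and it is precisely what licenses reuse of the earlier proof: once the state at layer~$j+1$ lies in $S_{j+1}$, no further reasoning about the fresh segment is required.

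Finally I would replay the original state-abstraction property over the reused segment. By the defining property of $S_1,\ldots,S_n$, we have $\forall i\in\{j+1,\ldots,n-1\},\forall x_i\in S_i:\; g_{i+1}(x_i)\in S_{i+1}$; chaining from $g_{j+1}(x_j)\in S_{j+1}$ gives $f(x)=(g_n\otimes\cdots\otimes g_1)(x)\in S_n$, and $S_n\subseteq D_{out}$ then closes the argument. I expect no genuinely hard step here: the proof is two straightforward inductions glued at a single junction, and the only point demanding care is confirming that the bridging condition matches the index at which the original chain resumes (layer~$j+1$), so that the two segments compose without a gap.
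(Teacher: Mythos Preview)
Your proposal is correct and is exactly the intended chaining argument; the paper omits this proof as ``intuitively simple'' and only sketches the idea in the surrounding discussion (building new abstractions layer-wise until they fall back inside an original $S_{j+1}$, after which the earlier proof artifact carries the computation into $D_{out}$). Your observation that $S'_j$ is implicitly defined by the $i=j-1$ case of the second bullet, despite the enumeration stopping at $S'_{j-1}$, is a useful clarification of a small notational slip in the statement.
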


	\vspace{1mm}
	Our second line of attacks utilizes the fact that the Lipschitz constant provides a conservative estimation on the new set of output values upon input domain enlargement. This enables another sufficient method to only examine the relation between $S_n$ and~$D_{out}$.

	\begin{prop}\label{prop:SVuDC.last.layer}[Lipschitz-based proof reuse] Given state abstractions $S_1, \ldots, S_n$ for establishing the proof of $\phi^{f}_{D_{in}, D_{out}}$, and given the Lipschitz constant $\ell$ of DNN $f$ over $X$. Let $\kappa$ be a constant where for any input $x_1 \in \Delta_{in}$, its distance to the nearest point $x_2 \in D_{in}$ is bounded by $\kappa$. Then $\phi^{f}_{D_{in} \cup \Delta_{in}, D_{out}}$ holds when the following conditions hold: 
		\[ \forall \hat{s} \in Y, s\in S_n \subseteq Y:  |\hat{s} - s| \leq \ell\kappa \rightarrow  \hat{s} \in D_{out} \]
	\end{prop}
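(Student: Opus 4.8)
The plan is to verify $f(x)\in D_{out}$ separately for the two parts of the enlarged domain $D_{in}\cup\Delta_{in}$. For $x\in D_{in}$ the claim is immediate from the reused state abstraction: chaining the three defining properties of $S_1,\ldots,S_n$ (the containment $g_1(x)\in S_1$, the layer-wise propagation $g_{i+1}(x_i)\in S_{i+1}$, and $S_n\subseteq D_{out}$) gives $f(x)\in S_n\subseteq D_{out}$. Thus the real work concerns only a newly added point $x_1\in\Delta_{in}$, and the goal is to certify its output using the Lipschitz constant together with the already-safe abstraction $S_n$.

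For such an $x_1$, I would first invoke the definition of $\kappa$ to fix a nearest old input $x_2\in D_{in}$ with $|x_1-x_2|\leq\kappa$. Since both $x_1,x_2\in X$, the Lipschitz bound $(1)$ applies and yields $|f(x_1)-f(x_2)|\leq\ell\,|x_1-x_2|\leq\ell\kappa$. The key idea is to use $f(x_2)$ as an anchor: because $x_2\in D_{in}$, the same abstraction chain used in the first case gives $f(x_2)\in S_n$. It then remains to instantiate the proposition's hypothesis with $s:=f(x_2)\in S_n$ and $\hat{s}:=f(x_1)\in Y$. The two facts $|\hat{s}-s|\leq\ell\kappa$ and $s\in S_n$ match the premise of the stated implication, so its conclusion delivers $\hat{s}=f(x_1)\in D_{out}$. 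Combining the two cases establishes $\phi^{f}_{D_{in}\cup\Delta_{in}, D_{out}}$.

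The argument is short, and the only genuine conceptual step is recognizing that $S_n$ supplies, for every new output, a concrete and provably safe anchor $f(x_2)$ whose distance to $f(x_1)$ is controlled by $\ell\kappa$; the quantifier over all $s\in S_n$ in the hypothesis is exactly what lets us appeal to this anchor without knowing $f(x_2)$ explicitly. I expect the remaining obstacle to be purely bookkeeping rather than substantive: confirming that $\Delta_{in}\subseteq X$ so that the Lipschitz inequality is legitimately applicable to the pair $(x_1,x_2)$, and checking that the reused chain genuinely carries every $D_{in}$-input all the way to $S_n$ rather than stopping at some intermediate $S_i$. Both are guaranteed by the standing assumptions, so no nontrivial estimate is required.
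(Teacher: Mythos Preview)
Your proof is correct and follows essentially the same approach as the paper: pick a nearby anchor $x_2\in D_{in}$, bound $|f(x_1)-f(x_2)|$ by $\ell\kappa$ via the Lipschitz constant, observe $f(x_2)\in S_n$, and apply the hypothesis. The only cosmetic difference is that the paper packages the last step by defining an enlarged set $\hat{S}_n$ containing all outputs within $\ell\kappa$ of $S_n$ and arguing $\hat{S}_n\subseteq D_{out}$, whereas you instantiate the implication directly; your formulation is arguably cleaner.
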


	\begin{proof}
		Given $x_1 \in \Delta_{in}$, following the definition of $\kappa$, there exists $x_2 \in D_{in}$ such that $|x_1 - x_2| \leq \kappa$. Then consider the difference between $f(x_1)$ and $f(x_2)$. The Lipschitz constant~$\ell$ over the complete input domain $X$ ensures the following: 
		\[|f(x_1)-f(x_2)|\leq \ell \;|x_1-x_2| \leq \ell \kappa\]
		
		As $S_n$ contains all $f(x_2)$ where $x_2 \in D_{in}$, the set $\hat{S}_{n}$ by enlarging $S_n$ with all points closer than $\ell \kappa$, i.e.,  $\hat{S}_{n} := \{\hat{s}\; |\; \forall s \in S_n: |\hat{s} - s| \leq \ell\kappa \}$, contains the set  $\{f(x_1) \;|\; x_1 \in \Delta_{in}\}$. So long if any  point of $\hat{S}_{n}$ is also in $D_{out}$ (i.e., the condition in the proposition), the property $\phi^{f}_{D_{in} \cup \Delta_{in}, D_{out}}$ also holds due to the following: 
		\[\{f(x_1) \;|\; x_1 \in \Delta_{in}\} \subseteq \hat{S}_{n} \subseteq D_{out}\]
		
		\vspace{-7mm}
	\end{proof}

	\noindent{\textbf{(Example)}} Consider the input of a DNN being two dimensional. Let $D_{in} = [1,2]\times[1,2]$, and $\Delta_{in} = [0.99,2.01]\times[0.99,2.01] \setminus D_{in}$. Then the smallest value of $\kappa$ can be $\sqrt{0.01^2 + 0.01^2}$, where we set $\kappa$ to be $0.02$ for simplicity purposes. $\kappa$ quantifies the amount of domain enlargement. Let output of the DNN $f$ be one dimensional, and let $D_{out} = [-10,10]$ and $S_n = [1,8]$. Assume that Lipschitz constant $\ell$ equals~$100$, then $\ell\kappa = 2$. Creating by expanding $S_n$ with amount $\ell\kappa$ on both sides, one gets $\hat{S}_n = [1-2, 8+2] = [-1, 10]$. As $ [-1, 10] \subseteq [-10,10]$, the safety property holds not only in $D_{in}$ but also for $D_{in} \cup \Delta_{in}$.

	\subsection{Solving SVbTV}
	
	For the general case where a DNN~$f$ is improved to~$f'$ with a slight change in the underlying parameters, our first strategy is to reuse the previously created state abstraction for~$f$ and check if they can also be used as the state abstraction for~$f'$. 
	
	\begin{prop}\label{prop:SVbTV.all.layers}[Reusing state abstraction - single layer]  Given state abstractions $S_1, \ldots, S_n$ for establishing the proof of $\phi^{f}_{D_{in}, D_{out}}$. When the following conditions hold, the property $\phi^{f'}_{D_{in}\cup \Delta_{in}, D_{out}}$ also holds.
		\begin{itemize}
			\item $\forall x \in D_{in}\cup \Delta_{in}$, $g'_1(x) \in S_1$, 
			\item $\forall i \in \{1, \ldots, n-2\}, \forall x_i \in S_i: g'_{i+1}(x_i) \in S_{i+1}$. 
			\item $\forall x_{n-1} \in S_{n-1}: g'_{n}(x_{n-1}) \in D_{out}$. 
		\end{itemize}
	\end{prop}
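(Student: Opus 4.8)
The plan is to reuse, almost verbatim, the layer-by-layer chaining argument behind Proposition~\ref{prop:SVuDC.first.layer}, the only difference being that I now propagate an arbitrary input through the \emph{perturbed} network $f'$ while carrying along the \emph{old} abstractions $S_1,\ldots,S_{n-1}$. Concretely, I would fix an arbitrary $x \in D_{in}\cup\Delta_{in}$ and name the intermediate values it produces in $f'$: set $y_1 := g'_1(x)$ and $y_{i+1} := g'_{i+1}(y_i)$, so that $f'(x)=y_n$. The goal is to show $y_i \in S_i$ for every $i \in \{1,\ldots,n-1\}$ by induction on $i$, and then to peel off the last layer separately. The three hypotheses are arranged precisely to supply the base case, the inductive step, and the closing step of this chain.

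For the steps in order: the base case $i=1$ is exactly the first hypothesis, $y_1=g'_1(x)\in S_1$, and this is the only place the enlarged domain $D_{in}\cup\Delta_{in}$ is used. For the inductive step, assuming $y_i\in S_i$ for some $i\le n-2$, the second hypothesis instantiated at this $i$ with $x_i:=y_i$ gives $y_{i+1}=g'_{i+1}(y_i)\in S_{i+1}$; iterating through $i=1,\ldots,n-2$ yields $y_{n-1}\in S_{n-1}$. For the closing step, the third hypothesis applied with $x_{n-1}:=y_{n-1}\in S_{n-1}$ gives $f'(x)=g'_n(y_{n-1})\in D_{out}$. Since $x$ was arbitrary, $\phi^{f'}_{D_{in}\cup\Delta_{in},D_{out}}$ holds.

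In contrast to the Lipschitz-based result, there is essentially no analytic obstacle inside the proof itself: it is a telescoping containment identical in spirit to the very definition of state abstractions, and the three bullets are nothing more than that definition with $S_n$ replaced by $D_{out}$ at the final layer. The one point that deserves care is the index bookkeeping around the last layer. The original proof of $\phi^{f}_{D_{in},D_{out}}$ relied on $g_n(S_{n-1})\subseteq S_n$ together with $S_n\subseteq D_{out}$; because the network changed at the last layer, $g'_n(S_{n-1})$ need not land inside $S_n$, so I would \emph{not} try to re-establish $S_n$ for $f'$. Instead the last layer is handled in one shot by the third hypothesis, which asks directly that $g'_n(S_{n-1})\subseteq D_{out}$, exploiting the slack provided by $S_n\subseteq D_{out}$.

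Thus the genuine difficulty is displaced out of the implication and into \emph{discharging} the hypotheses — in particular verifying the second bullet, that each old abstraction $S_i$ is still mapped into $S_{i+1}$ under the perturbed map $g'_{i+1}$. That is a separate, layer-local verification task (and the source of the practical savings claimed in the paper) rather than part of establishing the statement, so I expect the formal argument above to be short and the real work to lie in checking those per-layer containments for the fine-tuned parameters.
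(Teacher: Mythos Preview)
Your proposal is correct and is exactly the intended argument: the paper omits the proof of this proposition as ``intuitively simple,'' and the layer-by-layer chaining you describe (base case from the first bullet, inductive step from the second, closing step from the third, with the last layer targeting $D_{out}$ directly rather than $S_n$) is precisely that simple proof.
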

	
	Again, checking each condition is a substantially simpler problem that only involves encoding neurons in one layer, and overall there are~$n$ such problems to be checked independently. This makes the checking highly parallelizable and the worst case (under parallelization) is bounded by the maximum number of neurons in one layer. The generalization of Proposition~\ref{prop:SVbTV.all.layers} tries to only select a subset of $S_1, \ldots, S_n$ to be reused, with a price of each subproblem involving multiple layers. Still, each subproblem can be checked independently, thereby utilizing the power of parallelization.  
	
	\begin{prop}\label{prop:SVbTV.all.layers.multiple}[Reusing state abstraction - multiple layers]  Given state abstractions $S_1, \ldots, S_n$ for establishing the proof of $\phi^{f}_{D_{in}, D_{out}}$. The property $\phi^{f'}_{D_{in}\cup \Delta_{in}, D_{out}}$ 
		also holds when  the following condition is met: There exists positive integers $\langle\alpha_{1}\rangle, \langle\alpha_{2}\rangle, \ldots, \langle\alpha_{l}\rangle$, where $1 < \langle\alpha_{1}\rangle < \langle\alpha_{2}\rangle< \ldots < \langle\alpha_{l}\rangle < n-1$, such that
		\begin{itemize}
			\item $\forall x \in D_{in}\cup \Delta_{in}$, $ g'_{\langle\alpha_{1}\rangle}(\ldots(g'_1(x))) \in S_{\langle\alpha_{1}\rangle}$, 
			\item $\forall j \in \{1, \ldots, l - 1\}, \forall x_{\langle{\alpha_{j}\rangle}} \in S_{\langle\alpha_{j}\rangle}:$\\$  g'_{\langle\alpha_{j+1}\rangle}(\ldots (g'_{\langle\alpha_{j}\rangle+1}(x_{\langle \alpha_{j}\rangle}))) \in S_{\langle\alpha_{j+1}\rangle}$. 
			
			\item $ \forall x_{\langle\alpha_{l}\rangle} \in S_{\langle\alpha_{l}\rangle}:  g'_{n}(\ldots (g'_{\langle\alpha_{l}\rangle+1}(x_{\langle\alpha_{l}\rangle}))) \in D_{out}$. 
		\end{itemize}
	\end{prop}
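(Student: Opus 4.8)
The plan is to carry out a straightforward forward-chaining argument across the chosen checkpoint layers $\langle\alpha_1\rangle, \ldots, \langle\alpha_l\rangle$, directly generalizing the single-layer reasoning behind Proposition~\ref{prop:SVbTV.all.layers}. The essential observation is that $f'$ factors as a composition of blocks of consecutive layer transformations, with block boundaries placed at the checkpoints; each hypothesis of the proposition then asserts that one such block maps the abstraction at its left boundary into the abstraction (or, for the terminal block, into $D_{out}$) at its right boundary.

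Concretely, I would fix an arbitrary $x \in D_{in} \cup \Delta_{in}$ and use the first hypothesis as a base case to place the partial forward pass $g'_{\langle\alpha_1\rangle}(\ldots(g'_1(x)))$ inside $S_{\langle\alpha_1\rangle}$. Then I would induct on $j$: assuming the partial forward pass through layer $\langle\alpha_j\rangle$ lands in $S_{\langle\alpha_j\rangle}$, the second hypothesis instantiated at index $j$ guarantees that pushing this point through the block $g'_{\langle\alpha_j\rangle+1}, \ldots, g'_{\langle\alpha_{j+1}\rangle}$ yields a point in $S_{\langle\alpha_{j+1}\rangle}$. Because the value fed into this block is precisely the output of the previous block, the chained value coincides with the genuine partial forward pass of $f'$ on $x$, so the induction closes and yields that the forward pass through layer $\langle\alpha_l\rangle$ lies in $S_{\langle\alpha_l\rangle}$.

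Finally I would invoke the third hypothesis on the point in $S_{\langle\alpha_l\rangle}$ obtained at the end of the chain, pushing it through the terminal block $g'_{\langle\alpha_l\rangle+1}, \ldots, g'_n$ into $D_{out}$. Since this terminal composition applied to the output of the last intermediate block is exactly $f'(x)$, we conclude $f'(x) \in D_{out}$; as $x$ was arbitrary, $\phi^{f'}_{D_{in}\cup\Delta_{in}, D_{out}}$ holds.

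The argument has no genuine mathematical obstacle — it is a telescoping composition identity — so the only care needed is bookkeeping of the layer index ranges, making sure the blocks tile the layers $1, \ldots, n$ without gaps or overlaps, with the first block $g'_1, \ldots, g'_{\langle\alpha_1\rangle}$ treated as the base case and the terminal block $g'_{\langle\alpha_l\rangle+1}, \ldots, g'_n$ as the closing case. It is worth noting that, unlike the defining chain for the original $f$, here we never invoke $S_n \subseteq D_{out}$ separately: the third hypothesis absorbs the final stretch directly into $D_{out}$, which is exactly what permits stopping the chain at $\langle\alpha_l\rangle < n-1$ rather than propagating all the way to $S_n$.
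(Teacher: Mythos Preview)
Your forward-chaining argument is correct and is precisely the natural proof. The paper does not actually supply a proof for this proposition: it explicitly remarks that ``propositions with intuitively simple proofs are omitted,'' and Proposition~\ref{prop:SVbTV.all.layers.multiple} falls into that category (only Propositions~\ref{prop:SVuDC.first.layer}, \ref{prop:SVuDC.last.layer}, and \ref{prop:SVbTV.network} are given explicit proofs). So there is nothing to compare against beyond noting that your telescoping-composition reasoning is exactly the intended, unstated argument.
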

	
	As an example, consider a $6$-layer network, i.e., $n=6$. Let  $l=2$ and $\langle\alpha_{1}\rangle = 2$ and $\langle\alpha_{2}\rangle = 4$. Then the verification is split into three subproblems by reusing~$S_2$ and~$S_4$, namely
	
	\begin{itemize}
		\item $\forall x \in D_{in}\cup \Delta_{in}$, $ g'_{2}(g'_1(x)) \in S_{2}$, 
		\item $\forall x_{2} \in S_{2}:  g'_{4}(g'_{3}(x_{2}))) \in S_{4}$. 
		
		\item $ \forall x_{4} \in S_{4}:  g'_{6} (g'_{5}(x_{4})) \in D_{out}$. 
	\end{itemize}
	
	\vspace{2mm}
	Our final strategy is to reuse the previously created network abstraction for DNN $f$. 
	\begin{prop}\label{prop:SVbTV.network}[Reusing network abstraction] Given network abstraction~$\hat{f}$ of the original DNN~$f$ where $f\xrightarrow{D_{in}}\hat{f}$. Provided that $\phi^{f}_{D_{in}, D_{out}}$ is based on the proof where $\forall x \in D_{in}: \hat{f}(x) \in D_{out}$. Then for the new DNN~$f'$, if $f'\xrightarrow{D_{in}}\hat{f}$, then $\phi^{f'}_{D_{in}, D_{out}}$ also holds.
		
	\end{prop}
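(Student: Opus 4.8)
The plan is to prove this by directly chaining two set inclusions, since the proposition reduces to unfolding the definition of the network-abstraction relation together with the stored proof artifact.

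First I would fix the over-approximating output set induced by the abstraction on the input domain,
\[ \hat{Y} := \{\hat{f}(x)\;|\;x\in D_{in}\}. \]
The crucial observation is that $\hat{Y}$ depends only on $\hat{f}$ and $D_{in}$, and neither of these changes between the old and the new verification problem: the same abstraction $\hat{f}$ and the same domain $D_{in}$ appear in both $f\xrightarrow{D_{in}}\hat{f}$ and the assumed $f'\xrightarrow{D_{in}}\hat{f}$. Hence $\hat{Y}$ is literally the same object in both settings, and the reused proof speaks about it directly.

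Next I would invoke the stored proof artifact. By hypothesis the original safety guarantee $\phi^{f}_{D_{in}, D_{out}}$ was established through the abstraction, i.e. $\forall x\in D_{in}: \hat{f}(x)\in D_{out}$; taking the image over all $x\in D_{in}$, this is exactly the inclusion $\hat{Y}\subseteq D_{out}$. Then I would unfold the assumed relation $f'\xrightarrow{D_{in}}\hat{f}$, which by the definition of network abstraction states that $\forall x\in D_{in}: f'(x)\in \hat{Y}$. Composing the two gives $\forall x\in D_{in}: f'(x)\in \hat{Y}\subseteq D_{out}$, which is precisely $\phi^{f'}_{D_{in}, D_{out}}$.

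There is no genuine obstacle inside the argument itself; the statement is a one-line consequence of transitivity of membership through $\hat{Y}$. The only point requiring care is that the two abstraction relations must share the identical pair $(\hat{f}, D_{in})$: if the abstraction were re-derived into a different $\hat{f}$ for $f'$, or the domain were enlarged to $D_{in}\cup\Delta_{in}$, the set $\hat{Y}$ would change and the chain would break (note, accordingly, that this proposition states the conclusion only over $D_{in}$ rather than $D_{in}\cup\Delta_{in}$). The substantive work is therefore entirely external to this proof: it lies in discharging the hypothesis $f'\xrightarrow{D_{in}}\hat{f}$, i.e. verifying that the fine-tuned network $f'$ still admits the \emph{same} structural abstraction, and that check, not the inclusion above, is where the computational cost of reusing a network abstraction actually resides.
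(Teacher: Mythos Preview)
Your argument is correct and is essentially identical to the paper's own proof: both unfold the definition of $f'\xrightarrow{D_{in}}\hat{f}$ to get $f'(x)\in\{\hat{f}(x)\mid x\in D_{in}\}$ for all $x\in D_{in}$, then chain this with the stored inclusion $\{\hat{f}(x)\mid x\in D_{in}\}\subseteq D_{out}$. The only cosmetic difference is that you name the intermediate set $\hat{Y}$, whereas the paper writes it out each time.
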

	
	\begin{proof}
		If $f'\xrightarrow{D_{in}}\hat{f}$, then based on the definition of network abstraction we have  $\forall x\in D_{in}$, $f'(x)\in \{\hat{f}(x)\;|\;x\in D_{in}\}$.  As in the original proof, we have $\{\hat{f}(x)\;|\;x\in D_{in}\} \subseteq D_{out}$, so  $\forall x\in D_{in}$, $f'(x)\in D_{out}$, thereby implying that $\phi^{f'}_{D_{in}, D_{out}}$ also holds.
	\end{proof}
	
	While Proposition~\ref{prop:SVbTV.network} handles the case with parameter adjustment, encountering both domain enlargement and parameter adjustment, we can first assume that the domain is not enlarged, and perform network transformation to reach the same structure of~$\hat{f}$. Then, we can apply Propositions~\ref{prop:SVuDC.first.layer} or~\ref{prop:SVuDC.last.layer} by reusing the intermediate results in verifying the abstraction of the original DNN $f$ to check whether the property $\phi^{f'}_{D_{in}\cup \Delta_{in}, D_{out}}$ holds also in domain enlargement.

	\subsection{Incremental abstraction fixing for SVbTV}
	
	In this section, we provide a summary on how to proceed when previously mentioned sufficient conditions do not hold. We consider the case over Proposition \ref{prop:SVbTV.all.layers}, where extensions to Proposition \ref{prop:SVbTV.all.layers.multiple} is straightforward. If Proposition \ref{prop:SVbTV.all.layers} does not hold for DNN~$f'$, then one or more of the followings may occur:
	
	\begin{itemize}
		\item $\exists x\in D_{in}\cup \Delta_{in}, g'_1(x)\not\in S_1$, 
		\item $\exists\, i \in \{1, \ldots, n-2\}$ s.t. $\exists x_i \in S_i: g'_{i+1}(x_{i}) \not\in S_{i+1}$, or
		\item $\exists x_{n-1} \in S_{n-1}: g'_{n}(x_{n-1}) \not\in D_{out}$. 
	\end{itemize}

	We further restrict the case where only \emph{one state abstraction}~$S_{i+1}$ is problematic, i.e.,  $\exists x\in S_{i}: g'_{i+1}(x)\not\in S_{i+1}$; for other layers where $j\neq i$, $\forall x \in S_{j}: g'_{j+1}(x_{j})\in S_{j+1}$. We may find a new $S'_{i+1}$ as a replacement of~$S_{i+1}$ to ensure that $\forall x\in S_{i}: g'_{i+1}(x)\in S'_{i+1}$. However, we need 
	a forward propagation from $S'_{i+1}$ to later layers, and check whether there exists $k\in \{i+2,\ldots,n-2\}$, such that $\forall x\in S'_{k}: g'_{k+1}(x)\in S_{k+1}$. The existence of $k$ shows that the propagation from enlarged approximation in earlier layers is again covered by the approximation of later layers in the previous proof. Therefore, the property is preserved. 
	
	When we fail to find a $k$ such that the forward propagation can stop before reaching the last layer, we have to resort to traditional methods to check whether the additional approximation in $S'_{i+1}\setminus S_{i+1}$ is reachable.  The problem is reduced to check the safety property of the sub-network from the new DNN, where the property is encoded with the additional state abstraction. However, in the worst case, the problem can appear in the first state abstraction, and nothing can be reused; this implies that we may need to re-verify the whole network.

	\section{Experiment}\label{sec:experiment}

	\begin{figure}[t]
		\centering
		\includegraphics[width=0.85\columnwidth]{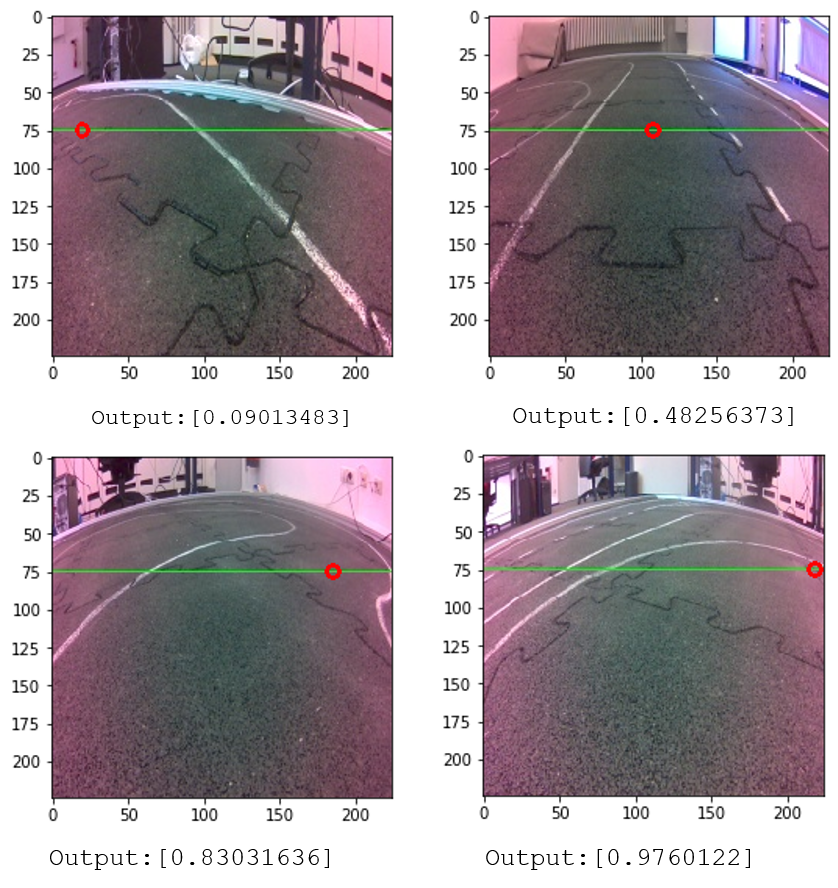}
		\vspace{-2mm}
		\caption{Visualizing the output of the DNN (red circle) on the race track }
		\label{fig.prediction.in.lab}
		\vspace{-2mm}
	\end{figure}
	
	\begin{figure}[t]
		\centering
		\includegraphics[width=0.8\columnwidth]{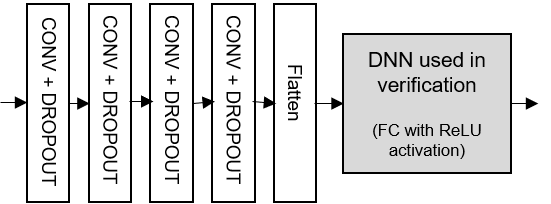}
		\vspace{-2mm}
		\caption{Visualizing the DNN to be formally verified in the experiment}
		\label{fig.DNN.being.verified}
		\vspace{-5mm}
	\end{figure}

	To understand the technology in realistic settings, in our experiment,  we use a~$1/10$ scale vehicle platform that equips with a GPU and a camera module to enable autonomous driving using visual perception. The DNN deployed in the GPU takes an RGB image of size~$224\times224$ and produces a single output~$v_{out}$ within the range~$[0,1]$. Output~$v_{out}$ can be used to reconstruct the visual waypoint via the formula $(x, y) := ({\small\textsf{int}}(224 * v_{out}), 75)$, which suggests the vehicle the next destination on the image plane to follow (examples shown in Figure~\ref{fig.prediction.in.lab}). We first train a DNN on CIFAR10 dataset with an enlarged image matching the dimension,  remove the last layer (i.e., perform transfer learning) and add a new linear layer in order to produce the~$v_{out}$ value. Then the modified single-output network is trained using a manually labeled data set collected on the race track. 
	
	The network to be verified is truncated from the original one for visual perception by taking layers after convolution, as illustrated in Figure~\ref{fig.DNN.being.verified}. The decision is largely due to the limitation of state-of-the-art DNN formal verification tools. The input bound~$D_{in}$ on the verified network is created by recording the minimum and maximum visited neuron value (output of layer ``{\small\textsf{Flatten}}" in Figure~\ref{fig.DNN.being.verified}) for the complete data set, together with additional buffers. Subsequently, we deploy the system, perform continuous execution on the race track, and monitor the DNN. Whenever an image, when being fed into the DNN, creates neuron values (for the output of layer ``{\small\textsf{Flatten}}" in Figure~\ref{fig.DNN.being.verified}) that exceed the bound, the enlarged bound is recorded to form~$D_{in} \cup \Delta_{in}$ for the second verification task. To generate model variations, we fix the weights on the convolution layer and perform fine-tuning such that multiple DNNs to be verified share the same input domain.

	We evaluate the performance of incremental verification in both SVuDC and SVbTV with various models during incremental tuning. 
	To generate state abstractions of layers, we adopt  tool ReluVal~\cite{wang2018formal}, which formally checks properties on a given neural network with symbolic interval analysis.  That is, the state abstraction of a neuron is bounded by its lower and upper valuations. In incremental verification, we set the bounds of state abstractions and check whether there is any violation.  
	To accelerate the verification, we decompose a network into two parts such that
	\begin{itemize}
		\item if the first part preserves the state abstraction, the verification stops in SVuDC case.
		\item verification can be parallelized for the SVbTV case.
	\end{itemize}
	
	Totally we generate four networks from the first in the incremental tuning process. Every network marked with~$i+1$ is obtained by tuning $i$-th network, for $1\leq i\leq 4$. The results are listed in Table~\ref{tab:saving}. As the approximation is usually larger than the reachable states, the verification shows that the properties are preserved in these networks. From Table~\ref{tab:saving}, we observe that incremental verification can always take less than ten percent cost of the original.\footnote{In Table~\ref{tab:saving}, the value for SVbTV is taken by the maximum execution time among all subproblems, as executing of each subproblem checking is independent of others.}  The reasoning is that the scale of the verified network is smaller, and the computation is saved by reusing the established state abstraction. The exception for the first row of SVbTV is that the verification for the first network is fast, making the time saving via decomposition less obvious.

	\begin{table}[t]
		\centering
		\caption{Time savings from incremental verification}
		\vspace{-2mm}
		\begin{tabular}{c|c|c}\hline
			case ID &  SVuDC time / original time & SVbTV time / original time   \\\hline
			1 & 5.27\% & 37.52\% \\ 
			2 & 0.72\% & 4.19\%\\ 
			3 & 0.16\% & 4.68\%\\ 
			4 & 1.34\% & 8.52\%\\ \hline
		\end{tabular}
		\label{tab:saving}
		\vspace{-5mm}
	\end{table}
	
	
	\section{Concluding Remarks}\label{sec:conclusion}
	
	In this paper, we formulated the problem of formal verification of DNN under the continuous improvement setting. We laid the theoretical foundation by proposing multiple sufficient conditions to avoid complete re-verification efforts, and conducted an initial experiment in the lab setting. 
	
	By combining formal verification of DNNs with the concept of continuous engineering, our initial work opens many interesting directions to be explored. Solver-wise, it is worth investigating how exact solvers based on MILP or SMT can be engineered to enable proof reuse, as we observe that techniques such as cuts (conditions to remove real-valued solutions while maintaining  all feasible integer-valued solutions) lose their validity upon domain enlargement. Another direction is to consider the continuous evolution of the quantitative specification of DNN~\cite{seshia2018formal} and the corresponding reuse in the formal verification setting. Yet another direction is to consider the symbolic reasoning using both forward and backward propagation in a continuous verification setup.

\bibliographystyle{IEEEtran}

\begin{thebibliography}{10}
	\providecommand{\url}[1]{#1}
	\csname url@samestyle\endcsname
	\providecommand{\newblock}{\relax}
	\providecommand{\bibinfo}[2]{#2}
	\providecommand{\BIBentrySTDinterwordspacing}{\spaceskip=0pt\relax}
	\providecommand{\BIBentryALTinterwordstretchfactor}{4}
	\providecommand{\BIBentryALTinterwordspacing}{\spaceskip=\fontdimen2\font plus
		\BIBentryALTinterwordstretchfactor\fontdimen3\font minus
		\fontdimen4\font\relax}
	\providecommand{\BIBforeignlanguage}[2]{{%
			\expandafter\ifx\csname l@#1\endcsname\relax
			\typeout{** WARNING: IEEEtran.bst: No hyphenation pattern has been}%
			\typeout{** loaded for the language `#1'. Using the pattern for}%
			\typeout{** the default language instead.}%
			\else
			\language=\csname l@#1\endcsname
			\fi
			#2}}
	\providecommand{\BIBdecl}{\relax}
	\BIBdecl
	
	\bibitem{cheng2019runtime}
	C.-H. Cheng, G.~N{\"u}hrenberg, and H.~Yasuoka, ``Runtime monitoring neuron
	activation patterns,'' in \emph{DATE}.\hskip 1em plus 0.5em minus 0.4em\relax
	IEEE, 2019, pp. 300--303.
	
	\bibitem{henzinger2019outside}
	T.~A. Henzinger, A.~Lukina, and C.~Schilling, ``Outside the box:
	Abstraction-based monitoring of neural networks,'' in \emph{ECAI}.\hskip 1em
	plus 0.5em minus 0.4em\relax {IOS} Press, 2020, pp. 2433--2440.
	
	\bibitem{wang2018formal}
	S.~Wang, K.~Pei, J.~Whitehouse, J.~Yang, and S.~Jana, ``Formal security
	analysis of neural networks using symbolic intervals,'' in \emph{{USENIX}
		Security}, 2018, pp. 1599--1614.
	
	\bibitem{gehr2018ai2}
	T.~Gehr, M.~Mirman, D.~Drachsler-Cohen, P.~Tsankov, S.~Chaudhuri, and
	M.~Vechev, ``Ai2: Safety and robustness certification of neural networks with
	abstract interpretation,'' in \emph{S\&P (Oakland)}.\hskip 1em plus 0.5em
	minus 0.4em\relax IEEE, 2018, pp. 3--18.
	
	\bibitem{singh2019abstract}
	G.~Singh, T.~Gehr, M.~P{\"u}schel, and M.~Vechev, ``An abstract domain for
	certifying neural networks,'' \emph{Proceedings of the ACM on Programming
		Languages}, vol.~3, no. POPL, pp. 1--30, 2019.
	
	\bibitem{tran2019star}
	H.-D. Tran, D.~M. Lopez, P.~Musau, X.~Yang, L.~V. Nguyen, W.~Xiang, and T.~T.
	Johnson, ``Star-based reachability analysis of deep neural networks,'' in
	\emph{FM}.\hskip 1em plus 0.5em minus 0.4em\relax Springer, 2019, pp.
	670--686.
	
	\bibitem{elboher2019an}
	Y.~Y. Elboher, J.~Gottschlich, and G.~Katz, ``An abstraction-based framework
	for neural network verification.'' in \emph{CAV}.\hskip 1em plus 0.5em minus
	0.4em\relax Springer, 2020, pp. 43--65.
	
	\bibitem{huang2020survey}
	X.~Huang, D.~Kroening, W.~Ruan, J.~Sharp, Y.~Sun, E.~Thamo, M.~Wu, and X.~Yi,
	``A survey of safety and trustworthiness of deep neural networks:
	Verification, testing, adversarial attack and defence, and
	interpretability,'' \emph{Computer Science Review}, vol.~37, p. 100270, 2020.
	
	\bibitem{katz2017reluplex}
	G.~Katz, C.~Barrett, D.~L. Dill, K.~D. Julian, and M.~J. Kochenderfer,
	``Reluplex: An efficient smt solver for verifying deep neural networks,'' in
	\emph{CAV}.\hskip 1em plus 0.5em minus 0.4em\relax Springer, 2017, pp.
	97--117.
	
	\bibitem{ehlers2017formal}
	R.~Ehlers, ``Formal verification of piece-wise linear feed-forward neural
	networks,'' in \emph{ATVA}.\hskip 1em plus 0.5em minus 0.4em\relax Springer,
	2017, pp. 269--286.
	
	\bibitem{katz2019marabou}
	G.~Katz, D.~A. Huang, D.~Ibeling, K.~Julian, C.~Lazarus, R.~Lim, P.~Shah,
	S.~Thakoor, H.~Wu, A.~Zelji{\'c} \emph{et~al.}, ``The marabou framework for
	verification and analysis of deep neural networks,'' in \emph{CAV}.\hskip 1em
	plus 0.5em minus 0.4em\relax Springer, 2019, pp. 443--452.
	
	\bibitem{cheng2017maximum}
	C.-H. Cheng, G.~N{\"u}hrenberg, and H.~Ruess, ``Maximum resilience of
	artificial neural networks,'' in \emph{ATVA}.\hskip 1em plus 0.5em minus
	0.4em\relax Springer, 2017, pp. 251--268.
	
	\bibitem{lomuscio2017approach}
	A.~Lomuscio and L.~Maganti, ``An approach to reachability analysis for
	feed-forward relu neural networks,'' \emph{arXiv preprint 1706.07351}, 2017.
	
	\bibitem{dutta2018output}
	S.~Dutta, S.~Jha, S.~Sankaranarayanan, and A.~Tiwari, ``Output range analysis
	for deep feedforward neural networks,'' in \emph{NFM}.\hskip 1em plus 0.5em
	minus 0.4em\relax Springer, 2018, pp. 121--138.
	
	\bibitem{singh2018boosting}
	G.~Singh, T.~Gehr, M.~P{\"u}schel, and M.~Vechev, ``Boosting robustness
	certification of neural networks,'' in \emph{ICLR}, 2018.
	
	\bibitem{tsuzuku2018lipschitz}
	Y.~Tsuzuku, I.~Sato, and M.~Sugiyama, ``Lipschitz-margin training: Scalable
	certification of perturbation invariance for deep neural networks,'' in
	\emph{NeurIPS}, 2018, pp. 6541--6550.
	
	\bibitem{ruan2018reachability}
	W.~Ruan, X.~Huang, and M.~Kwiatkowska, ``Reachability analysis of deep neural
	networks with provable guarantees,'' in \emph{IJCAI}, 2018, pp. 2651--2659.
	
	\bibitem{fazlyab2019efficient}
	M.~Fazlyab, A.~Robey, H.~Hassani, M.~Morari, and G.~Pappas, ``Efficient and
	accurate estimation of lipschitz constants for deep neural networks,'' in
	\emph{NeurIPS}, 2019, pp. 11\,427--11\,438.
	
	\bibitem{zou2019lipschitz}
	D.~Zou, R.~Balan, and M.~Singh, ``On lipschitz bounds of general convolutional
	neural networks,'' \emph{IEEE Transactions on Information Theory}, vol.~66,
	no.~3, pp. 1738--1759, 2019.
	
	\bibitem{paulsen2020reludiff}
	B.~Paulsen, J.~Wang, and C.~Wang, ``Reludiff: Differential verification of deep
	neural networks.'' in \emph{ICSE}.\hskip 1em plus 0.5em minus 0.4em\relax
	IEEE, 2020.
	
	\bibitem{seshia2018formal}
	S.~A. Seshia, A.~Desai, T.~Dreossi, D.~J. Fremont, S.~Ghosh, E.~Kim,
	S.~Shivakumar, M.~Vazquez-Chanlatte, and X.~Yue, ``Formal specification for
	deep neural networks,'' in \emph{ATVA}.\hskip 1em plus 0.5em minus
	0.4em\relax Springer, 2018, pp. 20--34.
	
\end{thebibliography}


\end{document}